
\documentclass[letterpaper, 10 pt, conference]{ieeeconf}  

\usepackage{amsmath,amssymb,amsfonts}
\usepackage{bm}
\usepackage{cite}
\usepackage{algorithm}
\usepackage{algorithmic}
\usepackage{graphicx}
\usepackage{subfig}
\usepackage{caption}

\usepackage{hyperref}
\usepackage{mathtools, cuted}
\usepackage{stfloats}
\usepackage{cleveref}
\usepackage{balance}
\usepackage{enumerate}
\usepackage{amsthm}
\usepackage{amsmath}
\usepackage{amssymb}
\usepackage{color}
\usepackage{booktabs}
\usepackage{lipsum}
\usepackage{setspace}

\newcommand{\RNum}[1]{\uppercase\expandafter{\romannumeral #1\relax}}

\newtheorem{lemma}{Lemma}
\newtheorem{example}{Example}
\newtheorem{theorem}{Theorem}

\newtheorem{assumption}{Assumption}
\newtheorem{problem}{Problem}
\newtheorem{proposition}{Proposition}

\newtheorem{remark}{Remark}

\usepackage{bm}
\usepackage{threeparttable}
\usepackage{mathrsfs} 
 

\newcommand{\mK}{{\mathsf{K}}}
\newcommand{\mH}{{\mathsf{H}}}
\newcommand{\tr}{{{\mathsf T}}}

\IEEEoverridecommandlockouts                              

\overrideIEEEmargins   




\title{\LARGE \bf
On the Optimization Landscape of Dynamic Output Feedback: A Case Study for Linear Quadratic Regulator
}

\author{Jingliang Duan, Wenhan Cao, Yang Zheng, Lin Zhao
\thanks{J. Duan and L. Zhao  are with the Department of Electrical and Computer Engineering, National University of Singapore, Singapore. {\tt\small Email: (duanjl,elezhli)@nus.edu.sg}.
}
\thanks{J. Duan is also with the School of Mechanical Engineering, University of Science and Technology Beijing, China. {\tt\small Email:duanjl@ustb.edu.cn}.
}
\thanks{W. Cao is with the School of Vehicle and Mobility, Tsinghua University, Beijing, 100084, China. {\tt\small Email: cwh19@mails.tsinghua.edu.cn}.
}
\thanks{Y. Zheng is with the Department of Electrical and Computer Engineering, University of California San Diego, USA. {\tt\small Email: zhengy@eng.ucsd.edu}.
}
\thanks{Corresponding author: L. Zhao
}
}

\begin{document}

\maketitle
\thispagestyle{empty}
\pagestyle{empty}

\begin{abstract}
The convergence of policy gradient algorithms in reinforcement learning hinges on the optimization landscape of the underlying optimal control problem. Theoretical insights into these algorithms can often be acquired from analyzing those of linear quadratic control. However, most of the existing literature only considers the optimization landscape for static full-state or output feedback policies (controllers). We investigate the more challenging case of dynamic output-feedback policies for linear quadratic regulation (abbreviated as \texttt{dLQR}), which is prevalent in practice but has a rather complicated optimization landscape. We first show how the \texttt{dLQR} cost varies with the coordinate transformation of the dynamic controller and then derive the optimal transformation for a given observable stabilizing controller. At the core of our results is the uniqueness of the stationary point of \texttt{dLQR} when it is observable, which is in a concise form of an observer-based controller with the optimal similarity transformation. These results shed light on designing efficient algorithms for general decision-making problems with partially observed information.
\end{abstract}


%
\IEEEpeerreviewmaketitle

\section{Introduction}
Reinforcement learning (RL) aims to directly learn optimal policies that minimize long-term cumulative costs through interacting with unknown environments. The past few years have witnessed great successes of RL in various domains, such as video games \cite{mnih2015human}, robots control \cite{nguyen2019review_RLrobot}, nuclear fusion \cite{degrave2022magnetic}, and recommender systems \cite{zou2019recommender}. Despite the impressive empirical performance of many policy gradient algorithms (such as DDPG \cite{lillicrap2015DDPG}, SAC \cite{Haarnoja2018SAC}, DSAC \cite{duan2021distributional}),  theoretical guarantees of their convergence, optimality, sample complexity, etc., remain unexplored and a big challenge.

As a case study, canonical optimal control of linear time-invariant (LTI) systems have been commonly analyzed to help reveal various theoretical properties of policy gradient methods \cite{fazel2018global,bu2019lqr,mohammadi2019CT-LQR,malik2019derivative}. 
For example, recent investigations from the learning perspective show that the cost function of LQR enjoys an interesting property of gradient dominance~\cite{fazel2018global,mohammadi2019CT-LQR}. This enables a global linear convergence characterization for a variety of model-based and model-free gradient descent methods for solving LQR, such as policy gradient and actor-critic algorithms~\cite{fazel2018global,yang2019global}, despite the non-convexity of optimizing the quadratic cost over the control gain. 
In addition, a series of subsequent studies examined the gradient dominance property for optimal control in different settings, including finite-horizon noisy LQR \cite{hambly2020noisyLQR}, LQR tracking \cite{ren2021lqrtracking},  Markovian jump LQR \cite{jansch2020Mjump}, linear $\mathcal{H}_2$ control with $\mathcal{H}_{\infty}$ constraints \cite{zhang2020robust}, finite MDPs \cite{bhandari2019global}, and risk-constrained LQR \cite{zhao2021primal}.

\vspace{-1.5pt}

The aforementioned literature mainly focuses on the case of static full state-feedback control. In many practical settings, the complete state information of the underlying system may not be directly available. Some recent works have studied static output-feedback (SOF) controllers to optimize a linear quadratic cost function~\cite{duan2022optimization,fatkhullin2021CTSOF,feng2020connectivity,bu2019topological}. Different from the full state-feedback LQR, it is shown that policy gradient methods for solving optimal SOF controllers do not possess the gradient dominance property and thus are unlikely to find the globally optimal solution.  In fact, the set of stabilizing SOF controllers is typically disconnected, and the stationary points can be local minima, saddle points, or even local maxima \cite{fatkhullin2021CTSOF,feng2020connectivity}. Moreover, even finding a stabilizing SOF controller is a challenging task~\cite{blondel1997np,syrmos1997static}. 

This paper takes a step further to analyze the optimization landscape of the infinite-horizon dynamic output-feedback LQR (\texttt{dLQR}). From classical control theory, a \textit{stabilizing} dynamic controller for~\texttt{dLQR} can be found via designing separately a stable observer and a state-feedback controller thanks to the separation principle~\cite{lewis2012optimal}. In the context of reinforcement learning, an observer-based dynamic controller can be learned through gradient descent optimization of the LQR cost. The very recent work~\cite{mohammadi2021lack} showed that gradient dominance condition does not hold for learning observer-based dynamic controller, and their analysis assumes complete knowledge of the system model. In contrast, we consider a model free setting, where we assume that a general full-order dynamic controller is learned directly from the LQR cost. The recent closely related work~\cite{tang2021analysis,zheng2021analysis} analyzed the structure of optimal dynamic controllers for the classical Linear Quadratic Gaussian (LQG) control problem. It was found that all stationary points that correspond to minimal controllers (i.e., whose state-space realization is reachable and observable) are globally optimal to LQG, and that these stationary points are identical up to coordinate (similarity) transformations. Different from LQG which considers stochastic linear systems and minimizes a limiting \textit{average} cost (or the variance of the steady state), the \texttt{dLQR} seeks a dynamic controller that minimizes an infinite-horizon \textit{accumulated} cost for a deterministic linear system. In the latter case, both the system transient dynamics induced by the initial system and controller states and the similarity transformation influence the cost, which suggests a more complicated optimization landscape. Notably, the existing analysis of LQG~\cite{tang2021analysis,zheng2021analysis,mohammadi2021lack} does not extend to the \texttt{dLQR} directly. Indeed, little is known about the optimality of the converged solutions of policy gradient methods for \texttt{dLQR}. 

In this paper, we provide a comprehensive analysis of the influence of similarity transformation and the structure of the stationary points. Specifically,
\begin{enumerate}
    \item We analyze the impact of similarity transformations on the \texttt{dLQR} cost and derive an explicit form of the unique optimal similarity transformation for a given observable stabilizing controller. 
    \item We characterize the unique observable stationary point of the \texttt{dLQR} cost, which is in a concise form of an observer-based controller with the optimal similarity transformation. 
\end{enumerate}


The remainder of this paper is organized as follows.  Section \ref{sec:preliminary} presents the problem statement of the \texttt{dLQR} problem, and Section \ref{sec.formulation} derives an analytical form of the \texttt{dLQR} cost as a function of dynamic controller parameters. Section \ref{sec.similarity_transformation} analyzes the impact of similarity transformations on the \texttt{dLQR} cost. Section \ref{sec.mainresults} characterizes the structure of the observable stationary controller. The paper is concluded in Section \ref{sec:conclusion}.

\textbf{Notation:} We use $\mathbb{N}$ to denote the set of natural numbers. Given a matrix $X \in \mathbb{R}^{n \times n}$, $\rho(X)$, ${\rm Tr}(X)$, $\lambda_{\rm min}(X)$, and $\|X\|_F$ denote its spectral radius, trace, minimum eigenvalue, and Frobenius norm, respectively. $\mathbb{S}^n_{+}$ (respectively, $\mathbb{S}^n_{++}$) denotes the set of symmetric $n \times n$ positive semidefinite (respectively, positive definite) matrices. Finally, $\mathrm{GL}_n$ denotes the set of $n\times n$ invertible matrices, and $I_n$ denotes the identity matrix. 

\section{Problem Statement}
\label{sec:preliminary}
In this section, we start with the canonical linear quadratic optimal control problem, 
and then present the dynamic output-feedback Linear Quadratic Regulator (\texttt{dLQR}). 

\subsection{Linear Quadratic Control}
Consider a discrete-time linear time-invariant (LTI) system 
\begin{equation}  
\label{eq.statefunction}
\begin{aligned}
x_{t+1} &= Ax_t+Bu_t,\\
y_t &= Cx_t,
\end{aligned}
\end{equation}
where $A\in \mathbb{R}^{n\times n}$, $B\in \mathbb{R}^{n\times m}$, $C\in \mathbb{R}^{d\times n}$ are system~matrices, and $x_t\in \mathbb{R}^n$, $u_t\in \mathbb{R}^m$,  $y_t\in \mathbb{R}^d$ are the system state, input, and output measurements at time $t$, respectively. The linear quadratic control seeks a sequence $u_0, u_1, \ldots, u_t, \ldots $ minimizing the infinite-horizon accumulated cost:
\begin{equation}   
\label{eq.objective}
\begin{aligned}
\min_{u_t} \;\; &\mathbb{E}_{x_0\sim \mathcal{D}}\left[\sum_{t=0}^{\infty}\left(x_t^\tr Qx_t+u_t^\tr Ru_t\right) \right] \\
\text{subject to}\;\;&~\eqref{eq.statefunction},
\end{aligned}
\end{equation}
where $Q \in \mathbb{S}_+^{n\times n}$ and $R \in \mathbb{S}_{++}^{m\times m}$ are performance weights, the initial state $x_0$ is randomly distributed according to a given distribution $\mathcal{D}$, and the control input $u_t$ at time $t$ is allowed to depend on the historical outputs $y_{0}, y_1, \ldots,y_t$ and inputs $u_{0}, u_1, \ldots, u_{t-1}$. The initial state distribution $\mathcal{D}$ has been commonly introduced to model the randomness of initial states \cite{fazel2018global,  lee2018primal,malik2019derivative} in a data-driven learning setting. For problem \eqref{eq.objective}, the following assumption is standard: 
\begin{assumption}
\label{assumption.control_observe}
$(A,B)$ is controllable, and $(C, A)$ and $(Q^{\frac{1}{2}},A)$ are observable.
\end{assumption}

Without loss of generality, we assume $C$ has full row rank.  The state-feedback LQR corresponds to $C=I_n$. In this special case, the globally optimal controller is a static linear feedback $u_t = K x_t$, where $K \in \mathbb{R}^{m \times n}$ can be obtained via solving a Riccati equation~\cite{bertsekas2017dynamic}. In general cases where $\text{rank}(C)<n$, a static output-feedback (SOF) gain $u_t = Ky_t$ with $K \in \mathbb{R}^{m \times 
d}$ is typically insufficient to obtain good control performance. In fact, the set of stabilizing SOF gains can be highly disconnected \cite{feng2020connectivity}, and even finding a stabilizing SOF controller is generally a challenging task~\cite{blondel1997np,syrmos1997static}. Unlike SOF control, under Assumption \ref{assumption.control_observe}, a stabilizing dynamic output controller always exists and can be found easily, thanks to the well-known separation principle \cite{lewis2012optimal}. 
\begin{remark}[Observer-based controllers] \label{remark:observer}
In classical control, the following standard observer-based controller can be designed to ensure a finite cost of~\eqref{eq.objective}
\begin{equation} \label{eq:observer-based-controller}
\begin{aligned}
    \xi_{t+1} &= (A - BK-LC) \xi_t + Ly_{t} \\
    u_t &= -K \xi_{t},
\end{aligned}
\end{equation}
where $K \in \mathbb{R}^{m \times n}, L \in \mathbb{R}^{n \times d}$ are the feedback gain and observer gain matrices such that $A - BK$ and $A - LC$ are stable~\cite{zhang2020robust}. 
\hfill $\square$
\end{remark}

\subsection{The \texttt{dLQR} Problem}
\label{sec.setting}
More generally, we consider the class of full-order dynamic output-feedback controllers in the form of\footnote{This is in the standard form of strictly proper dynamic controllers, where there is no direct feed-through of $y_t$ to $u_t$ \cite{tang2021analysis,zheng2021analysis,van2020data}.}
\begin{equation}   
\label{eq.dynamic_controller}
\begin{aligned}
\xi_{t+1} &= A_{\mK}\xi_t + B_{\mK}y_t,\\
u_t &= C_{\mK}\xi_t,
\end{aligned}
\end{equation}
where $\xi_t \in \mathbb{R}^n$ is the internal state of the controller, and matrices $C_{\mK}\in \mathbb{R}^{m\times n}$, $B_{\mK} \in \mathbb{R}^{n\times d}$, $A_{\mK} \in \mathbb{R}^{n\times n}$ are the controller parameters to be learned. The observer-based controller \eqref{eq:observer-based-controller} is a special case of \eqref{eq.dynamic_controller}. Note that the controller parameterization in \eqref{eq.dynamic_controller} does not explicitly rely on the knowledge of system parameters $A$, $B$, and $C$, which allows for model-free policy learning. 

In addition to $A_{\mK}$, $B_{\mK}$, and $C_{\mK}$, the transient behavior induced by the initial controller state (or initial state estimate) also affects the accumulated cost. Let $\xi_0$ be the initial state estimate and suppose $(x_0,\xi_0)$ follows a joint distribution $\bar{\mathcal{D}}$. The dynamic output-feedback LQR (\texttt{dLQR}) which aims to minimize the accumulated linear quadratic cost \cite{modares2016optimal,rizvi2018output,rizvi2019reinforcement,rizvi2020output} is given by
\begin{equation}   
\label{eq.dLQR}
\begin{aligned}
\min_{A_{\mK},B_{\mK},C_{\mK}} \;\; &\mathbb{E}_{(x_0,\xi_0)\sim \bar{\mathcal{D}}}\left[\sum_{t=0}^{\infty}\left(x_t^\tr Qx_t+u_t^\tr Ru_t\right) \right] \\
\text{subject to}\;\;&~\eqref{eq.statefunction},~\eqref{eq.dynamic_controller}.
\end{aligned}
\end{equation}


\section{Optimization formulation of the \texttt{dLQR} Problem}
\label{sec.formulation}

In this section, we derive the analytical form of the cost function ~\eqref{eq.dLQR} in terms of the dynamic controller parameters, which is needed for analyzing its optimization landscape. 

We start by combining \eqref{eq.dynamic_controller} with  \eqref{eq.statefunction} and get the closed-loop system
\begin{equation}   
\label{eq.closed-loop-system}
\begin{bmatrix}
     x_{t+1}\\
     \xi_{t+1} 
\end{bmatrix} =  \begin{bmatrix}
     A & BC_{\mK}\\
     B_{\mK}C & A_{\mK}
\end{bmatrix}\begin{bmatrix}
     x_{t}\\
     \xi_{t} 
\end{bmatrix}.
\end{equation}
We further denote 
$$
    \bar{x}_t := \begin{bmatrix}
x_t \\
\xi_t
\end{bmatrix}, \; \bar{A} := \begin{bmatrix}
     A & 0\\
     0 & 0
\end{bmatrix}, \; \bar{B} :=  \begin{bmatrix}
     B & 0 \\
     0 & I_n
\end{bmatrix}, \bar{C}:= \begin{bmatrix}
     C & 0\\
     0 & I_n
\end{bmatrix},
$$
and write the controller parameters in a compact form
$$\mK := \begin{bmatrix}
0_{m\times d} & C_{\mK} \\
B_{\mK} & A_{\mK}
\end{bmatrix}. 
$$
Then \eqref{eq.closed-loop-system} can be expressed as
\begin{equation}
\label{eq.closed-loop-system_short}
\bar{x}_{t+1} = (\bar{A}+\bar{B} \mK \bar{C})\bar{x}_{t}.
\end{equation}
The set $\mathbb{K}$ of all stabilizing controllers is given by
\begin{equation}
\label{eq:stabilizing-K}
\mathbb{K}:=\left\{\mK = \begin{bmatrix}
0_{m\times d} & C_{\mK} \\
B_{\mK} & A_{\mK}
\end{bmatrix}:\rho(\bar{A}+\bar{B}\mK\bar{C})<1\right\}.
\end{equation}
It is known that $\mathbb{K}$ is non-convex but has at most two disconnected components~\cite{tang2021analysis,zheng2021analysis}. Upon denoting 
$$\bar{Q}=\begin{bmatrix}
     Q & 0\\
     0 & 0
\end{bmatrix}, \; F=[0,I_n],$$ 
the \texttt{dLQR} problem \eqref{eq.dLQR} can be written as
\begin{equation}  
\label{eq.objective_with_K}
\begin{aligned}
\min_{\mK} \quad&  \mathop{\mathbb{E}}_{\bar{x}_0\sim \bar{\mathcal{D}}}\left[\sum_{t=0}^{\infty}\bar{x}_t^\tr \left(\bar{Q}+F^\tr C_{\mK}^\tr R C_{\mK} F \right)\bar{x}_t \right]\\
\text{subject to} \quad & \eqref{eq.closed-loop-system_short}, \; \mK\in \mathbb{K}.
\end{aligned}
\end{equation}

For the LTI system \eqref{eq.closed-loop-system_short}, the value function of state $\bar{x}$ under a stabilizing controller $\mK \in \mathbb{K}$ takes a quadratic form as
\begin{equation}
\nonumber
V_{\mK}(\bar{x}_t): = \bar{x}_t^\tr P_{\mK}\bar{x}_t,
\end{equation}
where $P_{\mK} \in \mathbb{S}_{+}^{2n}$. Define the state correlation matrix under a stabilizing controller $\mK \in \mathbb{K}$ as
\begin{equation}
\nonumber
\Sigma_{\mK}:=\mathbb{E}_{\bar{x}_0\sim \bar{\mathcal{D}}}\sum_{t=0}^{\infty}\bar{x}_t \bar{x}_t^\tr.
\end{equation}
For each $\mK \in \mathbb{K}$, with $P_{\mK}$ and $\Sigma_{\mK}$, it is well known that the \texttt{dLQR} cost value in \eqref{eq.objective_with_K} can be computed in the following lemma \cite{tang2021analysis,fazel2018global}. 
\begin{lemma}
\label{lemma.dLQR_cost}
Given each $\mK \in \mathbb{K}$, the \texttt{dLQR} cost value is 
\begin{equation}   
\label{eq.cost_in_P}
J(\mK) = {\rm Tr}(P_{\mK}X)= {\rm Tr}\left(\begin{bmatrix}
   Q  & 0 \\
   0  &  C_{\mK}^\tr RC_{\mK}
\end{bmatrix}\Sigma_{\mK}\right),
\end{equation}
where $P_{\mK}$ and $\Sigma_{\mK}$ are the unique positive semidefinite solutions to the following Lyapunov equations
\begin{subequations}
\begin{align} 
\label{eq.lyapunov_equation}
P_{\mK} &= \bar{Q} + F^\tr C_{\mK}^\tr RC_{\mK}F \\
&\qquad \qquad +(\bar{A}+\bar{B}\mK\bar{C})^\tr P_{\mK}(\bar{A}+\bar{B}\mK\bar{C}),  \nonumber  \\
\Sigma_{\mK} &= X +(\bar{A}+\bar{B}\mK\bar{C})\Sigma_{\mK}(\bar{A}+\bar{B}\mK\bar{C})^\tr, \label{eq.lyapunov_equation_sigma}
\end{align} 
\end{subequations}
with $X =  \mathbb{E}_{\bar{x}_0\sim \bar{\mathcal{D}}} \; \bar{x}_0\bar{x}_0^\tr$. 
\end{lemma}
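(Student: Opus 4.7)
The plan is to derive both characterizations of $J(\mK)$ from the closed-loop dynamics \eqref{eq.closed-loop-system_short} and the geometric decay guaranteed by $\mK \in \mathbb{K}$. Let me abbreviate $\bar{A}_{\mathrm{cl}} := \bar{A} + \bar{B}\mK\bar{C}$ and $\bar{Q}_{\mathrm{cl}} := \bar{Q} + F^\tr C_{\mK}^\tr R C_{\mK} F$, so that the stage cost satisfies $x_t^\tr Q x_t + u_t^\tr R u_t = \bar{x}_t^\tr \bar{Q}_{\mathrm{cl}} \bar{x}_t$ (this uses $F \bar{x}_t = \xi_t$ and $u_t = C_{\mK}\xi_t$). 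Because $\rho(\bar{A}_{\mathrm{cl}}) < 1$, the trajectory satisfies $\bar{x}_t = \bar{A}_{\mathrm{cl}}^t \bar{x}_0$ with exponentially decaying norm, so both the infinite sum defining the cost and the series defining $P_{\mK}$, $\Sigma_{\mK}$ converge absolutely, and all interchanges of sum, expectation, and trace below are justified.

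First I would treat the $P_{\mK}$-side. Starting from a deterministic initial condition $\bar{x}_0$, the accumulated cost along the closed-loop trajectory is
\begin{equation*}
\sum_{t=0}^{\infty} \bar{x}_0^\tr (\bar{A}_{\mathrm{cl}}^\tr)^t \bar{Q}_{\mathrm{cl}} \bar{A}_{\mathrm{cl}}^t \bar{x}_0 = \bar{x}_0^\tr P_{\mK} \bar{x}_0,
\qquad P_{\mK} := \sum_{t=0}^{\infty} (\bar{A}_{\mathrm{cl}}^\tr)^t \bar{Q}_{\mathrm{cl}} \bar{A}_{\mathrm{cl}}^t,
\end{equation*}
which is manifestly in $\mathbb{S}_{+}^{2n}$. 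Splitting the $t=0$ term off the sum yields $P_{\mK} = \bar{Q}_{\mathrm{cl}} + \bar{A}_{\mathrm{cl}}^\tr P_{\mK} \bar{A}_{\mathrm{cl}}$, which is exactly \eqref{eq.lyapunov_equation}; uniqueness of the positive semidefinite solution follows from the standard fact that for Schur-stable $\bar{A}_{\mathrm{cl}}$ the discrete Lyapunov operator $P \mapsto P - \bar{A}_{\mathrm{cl}}^\tr P \bar{A}_{\mathrm{cl}}$ is a bijection. Taking expectation over $\bar{x}_0 \sim \bar{\mathcal{D}}$ and using the cyclic property of the trace then gives $J(\mK) = \mathop{\mathbb{E}} \bar{x}_0^\tr P_{\mK} \bar{x}_0 = \mathrm{Tr}(P_{\mK} X)$.

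Next I would derive the $\Sigma_{\mK}$-side dually. Interchanging expectation, summation, and trace,
\begin{equation*}
J(\mK) = \mathop{\mathbb{E}}_{\bar{x}_0 \sim \bar{\mathcal{D}}} \sum_{t=0}^{\infty} \mathrm{Tr}\bigl(\bar{Q}_{\mathrm{cl}} \bar{x}_t \bar{x}_t^\tr\bigr) = \mathrm{Tr}(\bar{Q}_{\mathrm{cl}} \Sigma_{\mK}).
\end{equation*}
The Lyapunov identity \eqref{eq.lyapunov_equation_sigma} for $\Sigma_{\mK}$ follows by peeling off the $t=0$ term and using $\bar{x}_{t+1} = \bar{A}_{\mathrm{cl}} \bar{x}_t$:
\begin{equation*}
\Sigma_{\mK} = X + \mathop{\mathbb{E}} \sum_{t=0}^{\infty} \bar{x}_{t+1} \bar{x}_{t+1}^\tr = X + \bar{A}_{\mathrm{cl}} \Sigma_{\mK} \bar{A}_{\mathrm{cl}}^\tr,
\end{equation*}
again with uniqueness guaranteed by Schur stability. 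To finish, I would observe that because $F = [0,\ I_n]$, the rank-one correction $F^\tr C_{\mK}^\tr R C_{\mK} F$ places $C_{\mK}^\tr R C_{\mK}$ in the lower-right block and zeros elsewhere, so
\begin{equation*}
\bar{Q}_{\mathrm{cl}} = \begin{bmatrix} Q & 0 \\ 0 & C_{\mK}^\tr R C_{\mK} \end{bmatrix},
\end{equation*}
which yields the right-hand equality in \eqref{eq.cost_in_P}.

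No single step is a serious obstacle since this is a standard discrete-time Lyapunov computation; the only thing to be careful about is the bookkeeping in identifying $\bar{Q}_{\mathrm{cl}}$ with the block-diagonal matrix appearing in the lemma, and justifying the interchange of the infinite sum with expectation and trace, both of which follow from the exponential decay $\|\bar{x}_t\| \le C \rho^t \|\bar{x}_0\|$ (for some $\rho < 1$) guaranteed by $\mK \in \mathbb{K}$ together with the assumed finite second moment of $\bar{x}_0$ under $\bar{\mathcal{D}}$.
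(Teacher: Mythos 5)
Your proof is correct and is the standard argument; the paper itself states this lemma without proof, citing it as well known from the LQG/LQR literature, and your series construction of $P_{\mK}$ is exactly the representation the paper later uses in its proof of Proposition \ref{prop.similarity_variance} (the expansion \eqref{eq.P_sum}). The only cosmetic slip is calling $F^\tr C_{\mK}^\tr R C_{\mK} F$ a ``rank-one correction'' --- it has rank up to $\min(m,n)$ --- but the block computation you perform with it is right.
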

Note that $P_{\mK}$ can be partitioned into four $n\times n$ matrices:
\begin{equation} \label{eq:Pk-partition}
P_{\mK}=\begin{bmatrix}
  P_{\mK,11}  &  P_{\mK,12}\\
  P_{\mK,12}^\tr  & P_{\mK,22} 
\end{bmatrix}.
\end{equation}
Similarly, one has
\begin{equation} \label{eq:Sigma-X-partition}
\Sigma_{\mK}=\begin{bmatrix}
  \Sigma_{\mK,11}  &  \Sigma_{\mK,12}\\
  \Sigma_{\mK,12}^\tr  & \Sigma_{\mK,22} 
\end{bmatrix}, \;\; X=\begin{bmatrix}
  X_{11}  &  X_{12}\\
  X_{12}^\tr  & X_{22} 
\end{bmatrix}.
\end{equation}  
Finally, we formulate the \texttt{dLQR} problem \eqref{eq.dLQR} into the following optimization form.
\begin{problem}[Policy optimization for \texttt{dLQR}]
\label{pro.dLQR}
\begin{equation}  
\nonumber
\begin{aligned}
\min_{\mK} \quad&  J(\mK)\\
\text{subject to} \quad &\mK\in \mathbb{K}.
\end{aligned}
\end{equation}
where $J(\mK)$ is defined in~\eqref{eq.cost_in_P} and $\mathbb{K}$ is given in~\eqref{eq:stabilizing-K}. 
\end{problem}
Note that the initial estimate is sampled from a fixed initial distribution, and thus the matrix $X$ in~\eqref{eq.lyapunov_equation_sigma} is independent of the parameters $\mK$. 

Next, we will characterize several important properties that delineate the optimization landscape (such as the influence of similarity transformation and structure of stationary points) of policy gradient methods for solving Problem \ref{pro.dLQR}. The detailed proofs are provided in the Appendix.

\section{\texttt{dLQR} Cost under Different Similarity Transformations}
\label{sec.similarity_transformation}

For dynamic controllers, a widely used concept is the so-called \textit{similarity transformation}~\cite{zhou1996robust}. It is well-known that 
similarity transformations do not change the control performance of the LQG problem~\cite[Lemma 4.1]{zheng2021analysis}. 
However, in this section, we will show that the \texttt{dLQR} cost varies with different similarity transformations due to the transient behavior induced by initial controller states, and thus the optimization landscape of \texttt{dLQR} is distinct from LQG.

\subsection{Varying \texttt{dLQR} cost}

Given a controller $\mK$ and an invertible matrix $T\in \mathrm{GL}_n$, we define the similarity transformation on $\mK$ by
\begingroup
\def\arraystretch{0.95}
\setlength\arraycolsep{2.5pt}
\begin{equation} \label{eq:similarity-transformation}
\begin{aligned}
\mathscr{T}_T(\mK) \!= \!\begin{bmatrix}
I_m & 0 \\
0 & T
\end{bmatrix}\mK\begin{bmatrix}
I_d & 0 \\
0 & T
\end{bmatrix}^{-1}\!\!
=\!\begin{bmatrix}
0 & C_{\mK}T^{-1} \\
TB_{\mK} & TA_{\mK}T^{-1}
\end{bmatrix}.
\end{aligned}
\end{equation}
\endgroup
It is not hard to verify that if $\mK\in \mathbb{K}$ and $T\in \mathrm{GL}_n$, we have $\mathscr{T}_T(\mK) \in \mathbb{K}$; see~\cite[Lemma 3.2]{zheng2021analysis} for further discussions. 

Our first result reveals that the \texttt{dLQR} cost is not invariant w.r.t. the similarity transformation~\eqref{eq:similarity-transformation}. Indeed, we have the following result. 
\begin{proposition}
\label{prop.similarity_variance}
Let $\mK\in \mathbb{K}$ and $T\in \mathrm{GL}_n$. We have
\begin{equation}
\label{eq.cost_transformation}
J(\mathscr{T}_T(\mK))  = {\rm Tr}\left(P_{\mK}\bar{T}^{-1}X\bar{T}^{-\tr}\right),
\end{equation}
where $\bar{T}=\begin{bmatrix}
I_n & 0 \\
0 & T
\end{bmatrix}$, $P_{\mK}$ is the unique positive semidefinite solution to~\eqref{eq.lyapunov_equation}, and $X =  \mathbb{E}_{\bar{x}_0\sim \bar{\mathcal{D}}} \; \bar{x}_0\bar{x}_0^\tr$. 
\end{proposition}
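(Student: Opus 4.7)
The plan is to reduce the computation of $J(\mathscr{T}_T(\mK))$ to that of $J(\mK)$ via a change of basis on the closed-loop system, exploiting the uniqueness of the solution to the Lyapunov equation~\eqref{eq.lyapunov_equation}.

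First, I would establish the key closed-loop identity
\[
\bar{A} + \bar{B}\,\mathscr{T}_T(\mK)\,\bar{C} \;=\; \bar{T}\bigl(\bar{A} + \bar{B}\mK\bar{C}\bigr)\bar{T}^{-1},
\]
by direct block-matrix multiplication using the definition of $\mathscr{T}_T(\mK)$ in~\eqref{eq:similarity-transformation} together with the block structures of $\bar{A}$, $\bar{B}$, $\bar{C}$. In the same vein, I would verify the two auxiliary identities $\bar{T}^{-\tr}\bar{Q}\bar{T}^{-1} = \bar{Q}$ (since the lower-right block of $\bar{Q}$ vanishes) and $C_{\mathscr{T}_T(\mK)}F = C_{\mK}F\bar{T}^{-1}$, which together give
\[
\bar{T}^{-\tr}\bigl(\bar{Q} + F^\tr C_{\mK}^\tr R C_{\mK} F\bigr)\bar{T}^{-1}
= \bar{Q} + F^\tr C_{\mathscr{T}_T(\mK)}^\tr R\, C_{\mathscr{T}_T(\mK)} F .
\]

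Next, I would plug these into the Lyapunov equation~\eqref{eq.lyapunov_equation} satisfied by $P_{\mathscr{T}_T(\mK)}$ and conjugate both sides by $\bar{T}^\tr(\cdot)\bar{T}$. After cancelling the transformations on the cost term and absorbing them into the closed-loop matrix, the resulting equation reads
\[
\bar{T}^\tr P_{\mathscr{T}_T(\mK)}\bar{T} = \bar{Q} + F^\tr C_{\mK}^\tr R C_{\mK} F + (\bar{A}+\bar{B}\mK\bar{C})^\tr \bigl(\bar{T}^\tr P_{\mathscr{T}_T(\mK)}\bar{T}\bigr)(\bar{A}+\bar{B}\mK\bar{C}),
\]
which is exactly the Lyapunov equation defining $P_{\mK}$. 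Since $\mK \in \mathbb{K}$ ensures a unique positive semidefinite solution (the closed-loop matrix is Schur, and $\mathscr{T}_T(\mK)\in\mathbb{K}$ by the remark following~\eqref{eq:similarity-transformation} so that $\bar{T}^\tr P_{\mathscr{T}_T(\mK)}\bar{T}$ is well-defined and PSD), I may conclude
\[
P_{\mathscr{T}_T(\mK)} = \bar{T}^{-\tr} P_{\mK}\,\bar{T}^{-1}.
\]

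Finally, I would invoke Lemma~\ref{lemma.dLQR_cost} to write $J(\mathscr{T}_T(\mK)) = {\rm Tr}(P_{\mathscr{T}_T(\mK)} X)$ and apply the cyclic property of the trace to obtain
\[
J(\mathscr{T}_T(\mK)) = {\rm Tr}\bigl(\bar{T}^{-\tr} P_{\mK} \bar{T}^{-1} X\bigr) = {\rm Tr}\bigl(P_{\mK}\bar{T}^{-1} X \bar{T}^{-\tr}\bigr),
\]
which is the claimed identity~\eqref{eq.cost_transformation}. There is no real obstacle here beyond bookkeeping; the only delicate point is making sure the similarity is applied consistently to both the dynamics and the quadratic cost term so that the uniqueness argument can be invoked cleanly, and noting that the transformation leaves $\bar{Q}$ invariant because the state $x_t$ is untouched while only the controller coordinates are changed.
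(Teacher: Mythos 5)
Your proposal is correct and follows essentially the same route as the paper: both hinge on establishing $P_{\mathscr{T}_T(\mK)} = \bar{T}^{-\tr}P_{\mK}\bar{T}^{-1}$ from the Lyapunov equation \eqref{eq.lyapunov_equation} and then applying the cyclic property of the trace to \eqref{eq.cost_in_P}. The only cosmetic difference is that the paper reads this identity off the series representation \eqref{eq.P_sum} of the Lyapunov solution, whereas you derive it by conjugating the Lyapunov equation and invoking uniqueness via Lemma \ref{lemma.Lyapunov_stability}(a) — your version spells out the ``similarly'' step more explicitly.
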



Although any similarity transformation corresponds to the same transfer function in
the frequency domain, Proposition \ref{prop.similarity_variance} shows that the \texttt{dLQR} cost varies with different similarity transformations. This result is reasonable considering the facts that the initial controller state $\xi_0$ is assumed to follow a fixed distribution and that the similarity transformation implies a coordinate change of the internal controller state. If the controller coordinate changes while its initial state does not change, this essentially leads to a different dynamic controller~\eqref{eq.dynamic_controller}, which naturally results in a different \texttt{dLQR} cost value. 

\subsection{Optimal similarity transformation}
One natural consequence of Proposition \ref{prop.similarity_variance} is that for each stabilizing controller $\mK\in \mathbb{K}$, there might exist an optimal similarity transformation matrix $T^\star$  
in the sense that 
\begin{equation}
\label{eq.optimal_T}
J(\mathscr{T}_{T^\star}(\mK)) \leq J(\mathscr{T}_T(\mK)), \quad \forall T\in \mathrm{GL}_n. 
\end{equation}
In this case, we call $T^*$ 
the optimal similarity transformation matrix
 of $\mK$. 

In this paper, we refer to \eqref{eq.dynamic_controller} as an observable  controller if $(C_{\mK},A_{\mK})$ is observable. We denote the set of observable controllers as 
\begin{equation}
\nonumber
\mathbb{K}_o := \left\{ \begin{bmatrix}
0_{m\times d} & C_{\mK} \\
B_{\mK} & A_{\mK}
\end{bmatrix}: (C_{\mK},A_{\mK}) \text{ is observable} \right\}.
\end{equation}
Our next result characterizes the structure of the optimal similarity transformation for an observable stabilizing controller. 
\begin{theorem}
\label{theorem.optimal_tansformation}
Suppose $X \succ 0$ and $\mK\in \mathbb{K} \cap \mathbb{K}_o$. If the optimal transformation matrix $T^\star \in \mathrm{GL}_n$ satisfying~\eqref{eq.optimal_T} exists, it is unique and in the form of
\begin{equation}
\label{eq.optimal_tranforsmation}
T^\star=- X_{22}X_{12}^{-1}P_{\mK,12}^{-\tr}P_{\mK,22},
\end{equation}
where $P_\mK$, partitioned as~\eqref{eq:Pk-partition}, is the unique positive definite solution to \eqref{eq.lyapunov_equation}.
\end{theorem}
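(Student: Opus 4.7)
The plan is to compute $J(\mathscr{T}_T(\mK))$ as an explicit scalar function of $T$ via Proposition~\ref{prop.similarity_variance}, differentiate, and solve the resulting first-order optimality equation in $T$. Writing $\bar T^{-1}=\operatorname{diag}(I_n,T^{-1})$ and using the block partitions \eqref{eq:Pk-partition} and \eqref{eq:Sigma-X-partition}, direct block multiplication yields
$$
J(\mathscr{T}_T(\mK)) = \operatorname{Tr}(P_{\mK,11}X_{11}) + 2\operatorname{Tr}(P_{\mK,12}T^{-1}X_{12}^\tr) + \operatorname{Tr}(P_{\mK,22}T^{-1}X_{22}T^{-\tr}),
$$
so that only the last two summands depend on $T$, and $P_\mK$ is frozen because the Lyapunov equation~\eqref{eq.lyapunov_equation} is invariant under the similarity transformation (this is a by-product of the derivation of Proposition~\ref{prop.similarity_variance}).

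Next, using $\delta(T^{-1})=-T^{-1}(\delta T)T^{-1}$, cyclic permutation under the trace, and the symmetry of $X_{22}$ and $P_{\mK,22}$, the differential consolidates into
$$
\delta J = -2\operatorname{Tr}\!\bigl[(\delta T)\,T^{-1}\bigl(X_{12}^\tr P_{\mK,12}+X_{22}T^{-\tr}P_{\mK,22}\bigr)T^{-1}\bigr].
$$
Requiring this to vanish for all $\delta T$, and using invertibility of $T$, gives the first-order condition $X_{12}^\tr P_{\mK,12}+X_{22}T^{-\tr}P_{\mK,22}=0$, equivalently (by transposition) $P_{\mK,22}T^{-1}X_{22} = -P_{\mK,12}^\tr X_{12}$. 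From $X\succ 0$ we immediately have $X_{22}\succ 0$; and observability of $(C_\mK,A_\mK)$ will give $P_{\mK,22}\succ 0$ via the observability-Gramian interpretation $P_\mK=\sum_{t\ge 0}(A_{cl}^t)^\tr M A_{cl}^t$ with $M=\bar Q+F^\tr C_\mK^\tr R C_\mK F$ and $A_{cl}=\bar A+\bar B\mK\bar C$: testing against $w=[0;v]$ and exploiting the block form of $A_{cl}$, an induction shows that $v^\tr P_{\mK,22}v=0$ forces $C_\mK A_\mK^t v=0$ for every $t\ge 0$, hence $v=0$. Given $P_{\mK,22}$ and $X_{22}$ invertible, the optimality equation uniquely determines $T^{-1}=-P_{\mK,22}^{-1}P_{\mK,12}^\tr X_{12}X_{22}^{-1}$, and inverting yields the claimed formula $T^\star=-X_{22}X_{12}^{-1}P_{\mK,12}^{-\tr}P_{\mK,22}$. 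Uniqueness then follows immediately: any optimal $T^\star$ must satisfy the same first-order condition, which admits only one solution.

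The main obstacle I expect is the positive-definiteness of $P_{\mK,22}$ under observability of $(C_\mK,A_\mK)$; although it is a standard Gramian computation, it crucially relies on the precise block structure of $A_{cl}$ and $M$ (namely that the controller sub-dynamics $\xi_{t+1}=A_\mK\xi_t$ decouples when the plant-state block of the closed-loop trajectory is driven to zero), which is what links the observability of the \emph{controller} pair to the invertibility of the relevant block of the \emph{closed-loop} Lyapunov matrix. A secondary subtlety is that the final formula implicitly requires $X_{12}$ and $P_{\mK,12}$ to be invertible, but this invertibility is forced a posteriori whenever an optimal $T^\star\in\mathrm{GL}_n$ exists, so the statement as given—conditional on existence—is unaffected.
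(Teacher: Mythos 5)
Your proof is correct and follows essentially the same route as the paper: both start from the explicit trace formula supplied by Proposition~\ref{prop.similarity_variance}, set the first derivative to zero, and solve the resulting linear stationarity equation using the invertibility of $P_{\mK,22}$ and $X_{22}$. The only differences are minor: the paper substitutes $\mH=T^{-1}$ so that the cost becomes a strongly convex quadratic in $\mH$ (obtaining uniqueness from the Hessian), whereas you differentiate in $T$ directly and obtain uniqueness from the fact that any minimizer over the open set $\mathrm{GL}_n$ must satisfy a stationarity equation with a single solution; and you re-derive $P_{\mK,22}\succ 0$ via a Gramian/observability argument where the paper simply invokes its Lemma~\ref{lemma.positive_P}.
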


Theorem \ref{theorem.optimal_tansformation} identifies the form of the optimal similarity transformation, which is unique if it exists. This implies that if the optimal controller for Problem \ref{pro.dLQR} is observable, it may
be unique and be expressed as an optimal similarity transformation of a particular dynamic controller. However, the optimal similarity transformation may not always exist since $X_{12}$ can be singular. We give such an analytical example in \Cref{appendix.non-existence}.

We conclude this section by providing two examples to illustrate the impact of similarity transformation on the \texttt{dLQR} cost. 
\begin{example}
\label{example:1}
Consider an open-loop unstable dynamic system \eqref{eq.statefunction} with 
$$A=1.1, \; B=1, \; C=1, \; Q=5, \; R=1.$$ 
According to~\cite[Theoerem D.4, Example 11]{zheng2021analysis}, the set of stabilizing controllers $\mathbb{K}$ for this system has two disconnected components.  To define \texttt{dLQR}~\eqref{eq.dLQR}, we choose  
\begin{equation} \label{eq:exampleX}
X=\mathbb{E}_{\bar{x}_0\sim \bar{\mathcal{D}}} \; \bar{x}_0\bar{x}_0^\tr = \begin{bmatrix}
1&0.25\\
0.25&1
\end{bmatrix}.
\end{equation}
For each observable stabilizing controller $\mK$, Theorem~\ref{theorem.optimal_tansformation} implies that there exists an optimal transformation that leads to the lowest \texttt{dLQR} cost.  Fig. \ref{fig:example_1} demonstrates this fact. In particular, the red line of Fig. \ref{fig:example_1} displays the orbit of the similarity transformation of controller 
$$\mK = \begin{bmatrix}
0&-0.944\\
1.1&-0.944
\end{bmatrix}.$$ We can see that the \texttt{dLQR} cost changes with different similarity transformations, which also shows that finding the optimal similarity transformation (marked as the red point) can significantly improve the control performance. \hfill $\square$
\end{example}

\begin{example}
\label{example:2}
Consider an open-loop stable dynamic system \eqref{eq.statefunction} with 
$$A=0.9, \; B=1, \; C=1, \; Q=5, \; R=1.$$ 
According to~\cite[Theoerem D.4]{zheng2021analysis}, the set of stabilizing controllers $\mathbb{K}$ for this system is nonconvex but connected.  To define \texttt{dLQR}~\eqref{eq.dLQR}, we choose $X$ as~\eqref{eq:exampleX}. 
Again, for each observable stabilizing controller $\mK$, Theorem~\ref{theorem.optimal_tansformation} implies that there exists an optimal transformation, shown in Fig. \ref{fig:example_2}, where the red line displays the orbit of the similarity transformation of controller 
$$\mK = \begin{bmatrix}
0&-0.765\\
0.9&-0.765
\end{bmatrix}$$
and the red point represents the optimal similarity transformation. \hfill $\square$
\end{example}

\begin{figure}[t]
\centering
\captionsetup{singlelinecheck = false,labelsep=period, font=small}
\captionsetup[subfigure]{justification=centering}
\subfloat[]{\label{fig:example_1}\includegraphics[width=0.49\linewidth]{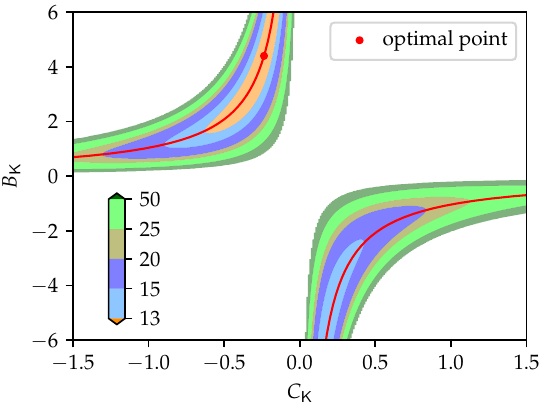}}
\subfloat[]{\label{fig:example_2}\includegraphics[width=0.49\linewidth]{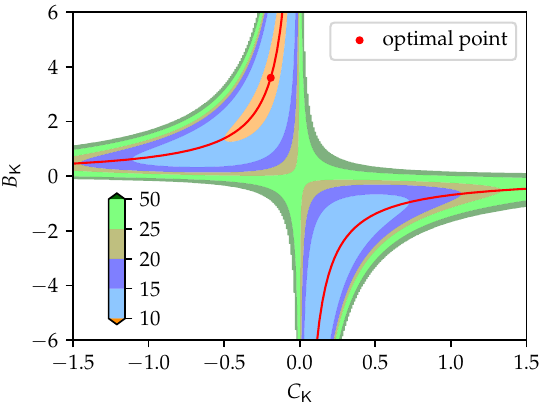}}
\caption{\texttt{dLQR} cost of Examples \ref{example:1} and  \ref{example:2}. (a) \texttt{dLQR} cost for system in Example \ref{example:1} when fixing $A_{\mK}=-0.944$. The red line represents all points in the set  $\{(B_{\mK},C_{\mK})|B_{\mK}=1.1T, C_{\mK}=-0.944/T, T\neq 0\}$. (b) \texttt{dLQR} cost for system in Example \ref{example:2} when fixing $A_{\mK}=-0.765$. The red line represents all points in the set  $\{(B_{\mK},C_{\mK})|B_{\mK}=0.9T, C_{\mK}=-0.765/T, T\neq 0\}$. }
\label{f:problem_1_cost}
\end{figure}

\section{Structure of Stationary Points}
\label{sec.mainresults}

In this section, we characterize the stationary points of Problem \ref{pro.dLQR} by letting the gradients of the \texttt{dLQR} cost be zero. Upon denoting the gradient of $J(\mK)$ w.r.t. $A_{\mK}$, $B_{\mK}$, and $C_{\mK}$ as $\nabla_{A_{\mK}}J(\mK)$, $\nabla_{B_{\mK}}J(\mK)$, and $\nabla_{C_{\mK}}J(\mK)$, respectively, we define the set of stationary points as 
$$
\mathbb{K}_s := \left\{ \begin{bmatrix}
0_{m\times d} & C_{\mK} \\
B_{\mK} & A_{\mK}
\end{bmatrix}:\left\|\begin{bmatrix}
0_{m\times d} & \nabla_{C_{\mK}} J(\mK) \\
\nabla_{B_{\mK}} J(\mK) & \nabla_{A_{\mK}} J(\mK)
\end{bmatrix}\right\|_F=0 \right\}.
$$  
We now look into the structure of $\mathbb{K}_s$, which is crucial for understanding the performance of policy gradient methods on \texttt{dLQR} problems. 

\begin{theorem}
\label{theorem.solution_expression}
Suppose $C$ has full row rank, $X \succ 0$, and Assumption \ref{assumption.control_observe} holds. If an observable stationary point, i.e., $\mK^\star \in \mathbb{K}_o \cap \mathbb{K}_s \cap \mathbb{K}$, to Problem \ref{pro.dLQR} exists, it is unique and in the form of 
\begin{equation}
\label{eq.optimal_form_K}
\mK^\star=\mathscr{T}_{T^\star}({\mK}^\ddagger),
\end{equation}
where 
\begin{equation}
\label{eq.riccati_K}
\mK^{\ddagger}:=\begin{bmatrix}
  0   & -K^\star  \\
  L^\star   & A-BK^\star-L^\star C
\end{bmatrix},
\end{equation}
$T^\star= X_{22}X_{12}^{-1}$ is the optimal transformation matrix of ${\mK}^\ddagger$ given in \eqref{eq.optimal_tranforsmation} with $
-P_{{\mK}^\ddagger,12}^{-\tr}P_{{\mK}^\ddagger,22}=I_n
$, and
\begin{equation}
\label{eq.optimal_L}
L^\star = A{\hat{\Sigma}}C^\tr (C{\hat{\Sigma}}C^\tr)^{-1},
\end{equation}
\begin{equation}
\label{eq.optimal_K}
K^\star= (R+B^\tr {\hat{P}}B)^{-1}B^\tr {\hat{P}}A,
\end{equation}
with $\hat{\Sigma}$ and ${\hat{P}}$ being the unique positive definite solutions to 
\begin{equation}
\label{eq.sigma_riccati}
{\hat{\Sigma}}=\Delta_X+A {\hat{\Sigma}} A^\tr - A {\hat{\Sigma}}C^\tr \left(C{\hat{\Sigma}}{C}^\tr\right)^{-1} C {\hat{\Sigma}} A^\tr,
\end{equation}
\begin{equation}
\label{eq.P_riccati}
{\hat{P}}=Q+ A^\tr {\hat{P}} A -A^\tr {\hat{P}} B (R+B^\tr {\hat{P}} B)^{-1}B^\tr {\hat{P}} A.
\end{equation}
\end{theorem}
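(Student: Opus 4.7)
The plan is to derive the three stationarity conditions by setting the partial gradients $\nabla_{A_{\mK}}J,\nabla_{B_{\mK}}J,\nabla_{C_{\mK}}J$ to zero, and then to use them in tandem with the Lyapunov equations~\eqref{eq.lyapunov_equation}--\eqref{eq.lyapunov_equation_sigma} to peel off an observer-based structure via a canonical change of controller coordinates, after which Theorem~\ref{theorem.optimal_tansformation} supplies the similarity factor $T^\star$.

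The first step is a standard Lyapunov-calculus computation. Differentiating $J(\mK)=\mathrm{Tr}(P_{\mK}X)$ and using the envelope-style identity that folds $\nabla P_{\mK}$ against $\Sigma_{\mK}$, one obtains
\begin{equation*}
\nabla_{\mK}J(\mK)=2\bar{B}^{\tr}P_{\mK}(\bar{A}+\bar{B}\mK\bar{C})\Sigma_{\mK}\bar{C}^{\tr}+2\begin{bmatrix}0 & RC_{\mK}\Sigma_{\mK,22}\\ 0 & 0\end{bmatrix}.
\end{equation*}
Extracting the $(1,2)$, $(2,1)$, and $(2,2)$ blocks yields three coupled matrix equations in $(A_{\mK},B_{\mK},C_{\mK},P_{\mK},\Sigma_{\mK})$. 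Under Assumption~\ref{assumption.control_observe}, the hypothesis $X\succ 0$, and observability of $(C_{\mK},A_{\mK})$, I would first establish $P_{\mK}\succ 0$, $\Sigma_{\mK}\succ 0$, and, crucially, the invertibility of the off-diagonal blocks $P_{\mK,12}$ and $\Sigma_{\mK,12}$; this invertibility is the lever that allows me to eliminate $B_{\mK}$ and $C_{\mK}$ from the stationary system.

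The decisive step is a canonical change of controller coordinates. Motivated by the Hyland--Bernstein analysis of LQG in~\cite{tang2021analysis,zheng2021analysis}, I set $V:=-P_{\mK^\star,12}^{-\tr}P_{\mK^\star,22}$ and work with $\tilde{\mK}:=\mathscr{T}_{V}(\mK^\star)$; a direct block manipulation verifies the gauge identity $-P_{\tilde{\mK},12}^{-\tr}P_{\tilde{\mK},22}=I_n$. Recasting the three stationarity equations in this gauge and folding in the corresponding Lyapunov equations, I would show that (i) the $(1,1)$-block of $P_{\tilde{\mK}}$ coincides with the unique positive-definite solution $\hat{P}$ of the control ARE~\eqref{eq.P_riccati}, yielding $C_{\tilde{\mK}}=-K^\star$ via~\eqref{eq.optimal_K}; (ii) a dual argument on $\Sigma_{\tilde{\mK}}$, using $X\succ 0$ together with $\nabla_{B_{\mK}}J=0$, produces the filtering ARE~\eqref{eq.sigma_riccati} with $B_{\tilde{\mK}}=L^\star$ from~\eqref{eq.optimal_L}; and (iii) the residual equation on $A_{\tilde{\mK}}$ then forces $A_{\tilde{\mK}}=A-BK^\star-L^\star C$. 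Consequently $\tilde{\mK}=\mK^{\ddagger}$.

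To close, I invoke Theorem~\ref{theorem.optimal_tansformation} on $\mK^{\ddagger}$: substituting the gauge identity $-P_{\mK^{\ddagger},12}^{-\tr}P_{\mK^{\ddagger},22}=I_n$ into~\eqref{eq.optimal_tranforsmation} immediately yields $T^\star=X_{22}X_{12}^{-1}$, so $\mK^\star=\mathscr{T}_{T^\star}(\mK^{\ddagger})$. Uniqueness then follows from the uniqueness of the stabilizing solutions of~\eqref{eq.sigma_riccati} and~\eqref{eq.P_riccati} together with the uniqueness clause in Theorem~\ref{theorem.optimal_tansformation}. I expect the main obstacle to be the algebraic reduction in step (iii): the three stationarity equations are coupled through both $P_{\tilde{\mK}}$ and $\Sigma_{\tilde{\mK}}$, and cleanly decoupling them into the two Riccati equations~\eqref{eq.sigma_riccati}--\eqref{eq.P_riccati} plus the residual $A_{\mK}$-identity requires carefully chaining the block Lyapunov relations without creating circular dependencies between the control and filtering sides.
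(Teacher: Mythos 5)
Your overall route is the same as the paper's: set the three block gradients to zero, combine the resulting linear relations with the block Lyapunov equations for $P_{\mK}$ and $\Sigma_{\mK}$, factor out a similarity transformation built from $-P_{12}^{-\tr}P_{22}$ to expose an observer-based controller, reduce to the two Riccati equations via Schur complements, and finish with Theorem~\ref{theorem.optimal_tansformation}. Your gradient formula restricted to the $(1,2),(2,1),(2,2)$ blocks matches the paper's Lemma~\ref{lemma:gradient}, and your gauge identity $-P_{\tilde{\mK},12}^{-\tr}P_{\tilde{\mK},22}=I_n$ for $\tilde{\mK}=\mathscr{T}_V(\mK^\star)$ is a correct one-line computation. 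However, there are three places where your plan papers over steps that the actual proof has to earn. First, you claim the invertibility of $P_{\mK,12}$ and $\Sigma_{\mK,12}$ follows from observability and $X\succ 0$; it does not — for a generic observable stabilizing $\mK$ these blocks can be singular. The paper obtains invertibility only \emph{at a stationary point}, as a consequence of the identity $P_{12}^\tr\Sigma_{12}+P_{22}\Sigma_{22}=0$, which itself is derived by substituting the stationarity relations into \eqref{eq.block_lyapunov_P12}--\eqref{eq.block_lyapunov_P22}. Without that identity your definition $V:=-P_{12}^{-\tr}P_{22}$ is not yet licensed.

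Second, your item (i) misidentifies $\hat{P}$: it is the Schur complement $P_{11}-P_{12}P_{22}^{-1}P_{12}^\tr$, not the $(1,1)$ block of $P_{\tilde{\mK}}$ (the transformation $\mathscr{T}_V$ leaves the $(1,1)$ block equal to $P_{11}$, and even in the gauge $P_{12}=-P_{22}$ one has $\hat{P}=P_{11}-P_{22}\neq P_{11}$). The reduction to \eqref{eq.P_riccati} is the subtraction of the transformed $(2,2)$ equation from the $(1,1)$ equation, and this is exactly the ``careful chaining'' you flag as the main obstacle — it is where the real work sits. Third, your closing ``so $\mK^\star=\mathscr{T}_{T^\star}(\mK^\ddagger)$'' needs the equality $V^{-1}=-P_{22}^{-1}P_{12}^\tr=X_{22}X_{12}^{-1}$, which does not follow from applying \eqref{eq.optimal_tranforsmation} to $\mK^\ddagger$ alone; the paper proves it as the separate identity $P_{12}^\tr X_{12}+P_{22}X_{22}=0$ extracted from \eqref{eq.block_lyapunov_sigma12}--\eqref{eq.block_lyapunov_sigma22}. (Alternatively you could argue that $\mK^\star$, being stationary for $J$ and lying on the orbit of $\mK^\ddagger$, must be the unique minimizer of the strongly convex orbit-restricted cost $g(\mH)$ from the proof of Theorem~\ref{theorem.optimal_tansformation} — a legitimately different and arguably cleaner closing step — but you would need to state that argument explicitly rather than rely on the ``so''.)
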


In Theorem~\ref{theorem.solution_expression}, $\mK^\star$ is an elegant closed-form solution since it satisfies the optimal similarity transformation of a special observer-based controller $\mK^\ddagger$. Note that $K^\star$ of \eqref{eq.riccati_K} is exactly the optimal control gain of the state-feedback LQR and $L^\star$ is a stable observer gain. In classical control theory \cite{lewis2012optimal}, the observer-based controller of Problem \ref{pro.dLQR} can separate into a stable observer and a state-feedback LQR; however, the transient behavior induced by the initial state and estimate is not considered. As a comparison, both the observer gain $L^\star$ and the optimal transformation matrix ${T^\star}$ of the observable stationary point are uniquely determined according to the prior information of the initial distribution of system state and controller state $(x_0,\xi_0)\sim \bar{\mathcal{D}}$. 

Note that Theorem \ref{theorem.solution_expression} does not discuss the theoretical optimality of the identified stationary point, which will be of interest for future work. In practical applications, if the optimal controller of a given system is known to be observable, then $\mK^\star$ in~\eqref{eq.optimal_form_K} must be the globally optimal controller due to its uniqueness. For instance, the observable stationary points of Examples \ref{example:1} and \ref{example:2}, i.e.,
$$\mK_1^\star = \begin{bmatrix}
0&-0.236\\
4.4& -0.944
\end{bmatrix} \quad \text{and} \quad \mK_2^\star = \begin{bmatrix}
0&-0.191\\
3.6& -0.765
\end{bmatrix},$$ 
are globally optimal by Theorem \ref{theorem.solution_expression}. They agree with the exhausted numerical grid search for the globally optimal points (marked as red points in Fig. \ref{f:problem_1_cost}) in Examples \ref{example:1} and \ref{example:2}, respectively.

 \balance
\section{Conclusion}
\label{sec:conclusion}
In this paper, we have analyzed the policy gradient optimization landscape of linear quadratic control problems using dynamic output-feedback policies. We have shown that the \texttt{dLQR} cost varies with similarity transformations, and identified the structure of the optimal similarity transformation of an observable stabilizing controller. More importantly, we characterized the stationary point of the policy gradient optimization and proved that the associated dynamic controller is unique if it is observable. Our work brings new insights for understanding the policy gradient algorithms for solving the partially observed control or decision-making problems. 


\appendix

\subsection{Block-wise Lyapunov equations and useful lemmas}
The block-wise Lyapunov equations in \eqref{eq.lyapunov_equation} and \eqref{eq.lyapunov_equation_sigma} will be used extensively in this paper. From \eqref{eq.lyapunov_equation}, we have
\begin{subequations}
\begin{align}
&\begin{aligned}
\label{eq.block_lyapunov_P11}
P_{11}&=Q+ A^\tr P_{11} A +C^\tr B_{\mK}^\tr P_{12}^\tr A \\
&\qquad\qquad + A^\tr P_{12} B_{\mK} C+C^\tr B_{\mK}^\tr P_{22} B_{\mK} C,
\end{aligned}\\
&\begin{aligned}
\label{eq.block_lyapunov_P12}
P_{12}&=A^\tr P_{11}BC_{\mK} + C^\tr B_{\mK}^\tr P_{12}^\tr B C_{\mK} \\
&\qquad\qquad + A^\tr P_{12} A_{\mK} +C^\tr B_{\mK}^\tr P_{22} A_{\mK},
\end{aligned}\\
&\begin{aligned}
\label{eq.block_lyapunov_P22}
P_{22}&=C_{\mK}^\tr RC_{\mK}+ A_{\mK}^\tr P_{12}^\tr B C_{\mK} + C_{\mK}^\tr B^\tr P_{12} A_{\mK} \\
&\qquad\qquad +C_{\mK}^\tr B^\tr P_{11} B C_{\mK}+A_{\mK}^\tr P_{22}A_{\mK}.
\end{aligned}
\end{align}
\end{subequations}
Similarly,
we get
\begin{subequations}
\label{eq.block_lyapunov_sigma}
\begin{align}
&\begin{aligned}
\label{eq.block_lyapunov_sigma11}
\Sigma_{11}&=X_{11}+A \Sigma_{11} A^\tr +BC_{\mK} \Sigma_{12}^\tr A^\tr\\
&\qquad \qquad + A \Sigma_{12} C_{\mK}^\tr B^\tr+B C_{\mK} \Sigma_{22} C_{\mK}^\tr B^\tr,
\end{aligned}\\
&\begin{aligned}
\label{eq.block_lyapunov_sigma12}
\Sigma_{12}&=X_{12}+  A \Sigma_{11}C^\tr B_{\mK}^\tr + B C_{\mK} \Sigma_{12}^\tr C^\tr B_{\mK}^\tr \\
&\qquad \qquad + A \Sigma_{12} A_{\mK}^\tr +B C_{\mK} \Sigma_{22} A_{\mK}^\tr,
\end{aligned}\\
&\begin{aligned}
\label{eq.block_lyapunov_sigma22}
\Sigma_{22}&=X_{22}+B_{\mK} C \Sigma_{11} C^\tr B_{\mK}^\tr + A_{\mK}\Sigma_{12}^\tr C^\tr B_{\mK}^\tr \\
&\qquad \qquad + B_{\mK}C \Sigma_{12} A_{\mK}^\tr+A_{\mK} \Sigma_{22}A_{\mK}^\tr.
\end{aligned}
\end{align}
\end{subequations}

Standard Lyapunov theorems will be used throughout the paper. We summarize them below for completeness.
\begin{lemma}[Lyapunov stability theorems \cite{gu2012discrete,lee2018primal}]
\label{lemma.Lyapunov_stability}
\ 
\begin{enumerate}[(a)]
\item If $\rho(A) <1$ and $Q \in \mathbb{S}_{+}^n$, the Lyapunov equation $P=Q+A^\tr P A$ has a unique solution $P \in \mathbb{S}_{+}^n$.
\item Let $Q \in \mathbb{S}_{++}^n$. $\rho(A) <1$ if and only if there exists a unique $P \in \mathbb{S}_{++}^n$ such that $P=Q+A^\tr P A$. 
\item Suppose $(C,A)$ is observable.  $\rho(A) <1$ if and only if there exists a unique $P \in \mathbb{S}_{++}^n$ such that $P=C^\tr C+A^\tr P A$. 
\end{enumerate}
\end{lemma}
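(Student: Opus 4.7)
The plan is to prove the three parts in order, with each later part leveraging the construction from part (a). The common thread is the explicit series representation $P = \sum_{k=0}^{\infty} (A^\tr)^k Q A^k$ (with $Q$ replaced by $C^\tr C$ in (c)), together with a test against a (possibly complex) eigenvector of $A$ for the converse directions.

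For part (a), I would first establish that when $\rho(A)<1$, Gelfand's formula gives $\|A^k\|^{1/k}\to\rho(A)$, so there exist constants $M>0$ and $\alpha\in(\rho(A),1)$ with $\|A^k\|\leq M\alpha^k$. This guarantees absolute convergence of the series $P=\sum_{k=0}^{\infty}(A^\tr)^k Q A^k$ in Frobenius norm. A direct substitution verifies $P=Q+A^\tr P A$, and since each summand is positive semidefinite, so is $P$. For uniqueness, if $P_1,P_2$ are two PSD solutions, their difference $D=P_1-P_2$ satisfies $D=A^\tr D A$, hence $D=(A^\tr)^k D A^k$ for every $k\in\mathbb{N}$; taking $k\to\infty$ forces $D=0$ by the same norm bound.

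For part (b), the forward direction uses part (a) to construct $P$, and since $Q\succ 0$, for any $x\neq 0$ we have $x^\tr P x \geq x^\tr Q x>0$, so $P\succ 0$. For the converse, I would pick any eigenvalue $\lambda\in\mathbb{C}$ of $A$ with eigenvector $v\in\mathbb{C}^n\setminus\{0\}$, so that $Av=\lambda v$. Multiplying $P=Q+A^\tr P A$ by $v^*$ on the left and $v$ on the right yields
\begin{equation}
\nonumber
v^* P v = v^* Q v + |\lambda|^2 v^* P v,
\end{equation}
i.e.\ $(1-|\lambda|^2)\,v^* P v = v^* Q v>0$. Since $v^* P v>0$, this forces $|\lambda|<1$, and ranging over all eigenvalues gives $\rho(A)<1$. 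Part (c) follows the same template with $Q$ replaced by $C^\tr C$: the forward direction uses observability to upgrade the PSD solution from (a) to a PD one, via the identity $x^\tr P x=\sum_{k=0}^{\infty}\|CA^k x\|^2$, which vanishes only when $x=0$ by observability. For the converse, the eigenvector computation gives $(1-|\lambda|^2)\,v^* P v = \|Cv\|^2$, and the Hautus/PBH test applied to the observable pair $(C,A)$ guarantees $Cv\neq 0$ whenever $Av=\lambda v$ with $v\neq 0$, so again $|\lambda|<1$.

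The main obstacle I expect is the bookkeeping around complex eigenvectors: since $A$ is real but its eigenvalues may be complex, the tests against $v$ must be done with the Hermitian adjoint $v^*$, and I would state up front that $v^* A^\tr P A v = (Av)^* P (Av) = |\lambda|^2 v^* P v$ to avoid confusion between $A^\tr$ and $A^*$. The other delicate point is invoking observability in part (c) in the converse direction; the cleanest way is the PBH rank condition, which directly rules out the pathological case $Cv=0$ on an eigenvector. Apart from these points, every step is a routine Lyapunov manipulation, and I would cite standard references (e.g.\ \cite{gu2012discrete,lee2018primal}) for readers who want the full textbook treatment.
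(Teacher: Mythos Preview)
Your proposal is correct and follows the standard textbook route: the explicit series $P=\sum_{k\geq 0}(A^\tr)^k Q A^k$ for existence, the homogeneous-equation argument $D=(A^\tr)^k D A^k\to 0$ for uniqueness, and the eigenvector test $(1-|\lambda|^2)\,v^*Pv = v^*Qv$ (with the PBH condition handling $Q=C^\tr C$) for the converse directions. The bookkeeping points you flag---Hermitian adjoints for complex eigenvectors and the PBH test to rule out $Cv=0$---are exactly the right places to be careful, and your handling is fine.

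As for comparison: the paper does not actually prove this lemma. It is stated in the appendix under ``useful lemmas'' and attributed directly to the cited references \cite{gu2012discrete,lee2018primal}; no argument is given. So your write-up supplies what the paper omits, and the argument you give is the canonical one found in those references.
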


Given an observable stabilizing controller, the following lemma is a discrete-time counterpart to~\cite[Lemma 4.5]{zheng2021analysis}.
\begin{lemma}
\label{lemma.positive_P}
Under Assumption \ref{assumption.control_observe}, if $\mK\in \mathbb{K}\cap\mathbb{K}_o$, the solution $P_{\mK}$ to \eqref{eq.lyapunov_equation} is unique and positive definite.
\end{lemma}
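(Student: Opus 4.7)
\textbf{Proof plan for Lemma \ref{lemma.positive_P}.} The lemma makes two claims about the solution of the discrete Lyapunov equation~\eqref{eq.lyapunov_equation}: uniqueness and strict positive-definiteness. The plan is to handle the two separately, invoking the corresponding parts of Lemma \ref{lemma.Lyapunov_stability}. Uniqueness is immediate: since $\mK\in\mathbb{K}$ we have $\rho(\bar{A}+\bar{B}\mK\bar{C})<1$, and the constant term
$$
\bar{Q}+F^\tr C_{\mK}^\tr R C_{\mK} F
=\begin{bmatrix} Q & 0\\ 0 & C_{\mK}^\tr R C_{\mK}\end{bmatrix}
$$
is positive semidefinite, so Lemma \ref{lemma.Lyapunov_stability}(a) gives a unique $P_{\mK}\in\mathbb{S}_+^{2n}$.

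For positive-definiteness, I would factor the constant term as $H^\tr H$ with
$$
H=\begin{bmatrix} Q^{1/2} & 0\\ 0 & R^{1/2}C_{\mK}\end{bmatrix},
$$
so that \eqref{eq.lyapunov_equation} becomes $P_{\mK}=H^\tr H+(\bar{A}+\bar{B}\mK\bar{C})^\tr P_{\mK}(\bar{A}+\bar{B}\mK\bar{C})$. By Lemma \ref{lemma.Lyapunov_stability}(c), it then suffices to show that the pair $(H,\bar{A}+\bar{B}\mK\bar{C})$ is observable, which I would verify via the PBH test. Suppose $\lambda\in\mathbb{C}$ and $(v_1^\tr,v_2^\tr)^\tr\neq 0$ satisfy
$$
(A-\lambda I_n)v_1+BC_{\mK}v_2=0,\quad B_{\mK}Cv_1+(A_{\mK}-\lambda I_n)v_2=0,
$$
together with $Q^{1/2}v_1=0$ and $R^{1/2}C_{\mK}v_2=0$. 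Since $R\succ 0$, the last equation gives $C_{\mK}v_2=0$, which reduces the first block equation to $(A-\lambda I_n)v_1=0$. Combined with $Q^{1/2}v_1=0$, the observability of $(Q^{1/2},A)$ in Assumption \ref{assumption.control_observe} (via PBH) forces $v_1=0$. Substituting back then yields $(A_{\mK}-\lambda I_n)v_2=0$ with $C_{\mK}v_2=0$, so the observability of $(C_{\mK},A_{\mK})$ implied by $\mK\in\mathbb{K}_o$ gives $v_2=0$, a contradiction. Hence $(H,\bar{A}+\bar{B}\mK\bar{C})$ is observable, and Lemma \ref{lemma.Lyapunov_stability}(c) delivers $P_{\mK}\succ 0$.

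The only nontrivial step is the PBH decoupling above; the key structural feature exploited is that the $(1,2)$-block of $H$ vanishes, which cleanly separates the observability of the plant part from that of the controller part. No delicate calculation is involved, and the argument is the natural discrete-time analogue of~\cite[Lemma 4.5]{zheng2021analysis}.
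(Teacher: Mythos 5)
Your proof is correct. The paper itself gives no proof of this lemma, deferring to the continuous-time counterpart in \cite[Lemma 4.5]{zheng2021analysis}, and your argument — uniqueness from Lemma \ref{lemma.Lyapunov_stability}(a), then positive definiteness via the factorization $H^\tr H$ and a PBH check of observability of $(H,\bar{A}+\bar{B}\mK\bar{C})$ using $R\succ 0$, observability of $(Q^{1/2},A)$, and observability of $(C_{\mK},A_{\mK})$ — is precisely the standard discrete-time adaptation the authors intend.
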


\subsection{Non-existence of the optimal similarity transformation}
\label{appendix.non-existence}

We take a one-dimensional system as an example (i.e., $x_t$ and $\xi_t$ are scalars), to show the non-existence of the optimal similarity transformation if $X_{12}$ is singular. 
Under similarity transformation~\eqref{eq:similarity-transformation}, we have 
\begin{equation}
\label{eq.u_sequence}
\begin{aligned}
u_0 &= C_{\mK}T^{-1}\xi_0, \\
u_1 &= C_{\mK}T^{-1}( TA_{\mK}T^{-1}\xi_0 + TB_{\mK}y_0) \\
& = C_{\mK}A_{\mK}T^{-1}\xi_0 + C_{\mK}B_{\mK}y_0.
\end{aligned}
\end{equation}
Given an observable stabilizing controller $\mK$, by \eqref{eq.optimal_tranforsmation} of  \Cref{theorem.optimal_tansformation}, one has 
\begin{equation}
\label{eq.T_limit}
\lim_{X_{12}\rightarrow 0} (T^\star)^{-1} = \lim_{X_{12}\rightarrow 0} - P_{22}^{-1}P_{12}^\tr X_{12}X_{22}^{-1} =  0.
\end{equation}
Note that the cross-correlation value $X_{12}=0$ if the initial controller state $\xi_0$ is zero-mean and independent of the initial system state $x_0$. Using \eqref{eq.T_limit} in \eqref{eq.u_sequence}, we can observe that the controller input $u_t$ tends to ignore the influence of $\xi_0$ by increasing $T$ in this one-dimensional instance. This is because that the initial controller state provides no information for the estimation of the initial system state if $X_{12}=0$.

\subsection{Proof of Proposition \ref{prop.similarity_variance}}
\begin{proof}
Since $\mK\in \mathbb{K}$, by Lemma \ref{lemma.Lyapunov_stability}(a), the Lyapunov equation \eqref{eq.lyapunov_equation} admits a unique positive semidefinite solution for both $\mK$ and $\mathscr{T}_T(\mK)$. Hence, the solution of \eqref{eq.lyapunov_equation} for $\mK$ can be expressed as 
\begin{equation}
\label{eq.P_sum}
P_{\mK}= \sum_{k=0}^{\infty}\left((\bar{A}+\bar{B}\mK\bar{C})^\tr\right)^k \begin{bmatrix}
  Q  & 0 \\
  0  &  C_{\mK}^\tr RC_{\mK}
\end{bmatrix}(\bar{A}+\bar{B}\mK\bar{C})^k.  
\end{equation}
Similarly, by the definition of $\mathscr{T}_T(\mK)$ in~\eqref{eq:similarity-transformation}, one has
\begin{equation}
\label{eq.similarity_P}
P_{\mathscr{T}_T(\mK)}= \bar{T}^{-\tr}P_{\mK}\bar{T}^{-1}.
\end{equation}
Therefore, by \eqref{eq.cost_in_P}, we have
\begin{equation}
\nonumber
J(\mathscr{T}_T(\mK)) = {\rm Tr}\left(P_{\mathscr{T}_T(\mK)}X\right)  = {\rm Tr}\left(P_{\mK}\bar{T}^{-1}X\bar{T}^{-\tr}\right),
\end{equation}
which completes the proof.
\end{proof}

\subsection{Proof of Theorem \ref{theorem.optimal_tansformation}}
\begin{proof}
By \eqref{eq.cost_transformation}, $J(\mathscr{T}_T(\mK))$ can be expressed as
\begin{equation}
\label{eq.cost_of_similarity}
\begin{aligned}
J(\mathscr{T}_T(\mK))&={\rm Tr}\left(P_{11}X_{11}+P_{12}T^{-1}X_{12}^\tr \right.\\
&\qquad \left. +P_{12}^\tr X_{12}T^{-\tr}+P_{22}T^{-1}X_{22}T^{-\tr}\right).
\end{aligned}
\end{equation}
For notational convenience, given a stabilizing controller $\mK\in \mathbb{K}$, we denote the cost value $J(\mathscr{T}_T(\mK))$ w.r.t. similarity transformation $T$ as 
$$
g(\mH) := J(\mathscr{T}_T(\mK)), \quad \text{with} \; \mH := T^{-1}\in \mathrm{GL}_n.
$$
It is clear that $g(\mH)$ is twice differentiable w.r.t. $\mH$. 
The gradient of $g(\mH)$ w.r.t. $\mH$ can be derived as
\begin{equation}
\label{eq.gradient_of_gH}
\begin{aligned}
\nabla_{\mH}g(\mH)=2(P_{12}^\tr X_{12}+P_{22}\mH X_{22}).
\end{aligned}
\end{equation}
By Lemma \ref{lemma.positive_P}, the solution $P_{\mK}$ to \eqref{eq.lyapunov_equation} is positive definite, which means $P_{22}$ is invertible. We also have that $X_{22}$ is invertible since $X \succ 0$. Let $\nabla_{\mH}g(\mH)=0$, we have 
$$
    {\mH}^\star = - P_{22}^{-1}P_{12}^\tr X_{12}X_{22}^{-1}. 
$$
By $(T^\star)^{-1}={\mH}^\star$, we now identify $T^\star$ is in the form of~\eqref{eq.optimal_tranforsmation}. This also implies that if $T^\star$ exists, both $X_{12}$ and $P_{12}$ must be invertible.

Next, we show that $T^\star$ in~\eqref{eq.optimal_tranforsmation} is the unique globally optimal similarity transformation matrix such that \eqref{eq.optimal_T} holds. We analyze the  Hessian of $g(\mH)$ applied to a nonzero direction $Z \in \mathbb{R}^{n\times n}$, which is  
\begin{equation} 
\nonumber
\nabla^2g(\mH)[Z,Z]:=\frac{d^2}{d\eta^2}\Big|_{\eta=0} g(\mH + \eta Z).
\end{equation}
By \eqref{eq.cost_of_similarity}, we can further show that 
\begin{equation} 
\nonumber
\begin{aligned}
\,&\nabla^2g(\mH)[Z,Z] \\ 
=\,&\frac{d^2}{d\eta^2}\Big|_{\eta=0}{\rm Tr}(P_{12}(\mH+\eta Z)X_{12}^\tr+P_{12}^\tr X_{12}(\mH+\eta Z)^\tr\\ 
&\qquad\qquad\qquad\quad+P_{22}(\mH+\eta Z)X_{22}(\mH+\eta Z)^\tr)\\
=\,&2{\rm Tr}(P_{22}ZX_{22}Z^\tr)\\
\geq\,& 2\lambda_{\rm min}(P_{22})\lambda_{\rm min}(X_{22})\|Z\|_F^2\\
>\,& 0.
\end{aligned}
\end{equation}
We extend the function $g(\mH)$ to be defined on a convex superset $\mathbb{R}^{n\times n}$ of $\mathrm{GL}_n$. 
It is immediate that $g(\mH)$ is strongly convex over $\mathbb{R}^{n\times n}$, which means the globally optimum of $g(\mH)$ over $\mathrm{GL}_n$ is unique when it exists. By $T = \mH^{-1}$, then the globally optimum of $J(\mathscr{T}_T(\mK))$ is also unique over $T \in \mathrm{GL}_n$, thus \eqref{eq.optimal_T} is satisfied for a unique $T^*$.
\end{proof}

\subsection{Policy Gradient Expression}
\begin{lemma}[Policy Gradient Expression]
\label{lemma:gradient}
 For $\forall  \mK\in \mathbb{K}$, the policy gradient of Problem \ref{pro.dLQR} is 
\begin{subequations}
\label{eq.gradient}
\begin{align}
&\begin{aligned}
\label{eq.gradient_k12}
\nabla_{C_{\mK}} &J(\mK)=2B^\tr (P_{11}A+P_{12}B_{\mK}C)\Sigma_{12}\\
&+2((R + B^\tr P_{11} B) C_{\mK}+B^\tr P_{12}A_{\mK})\Sigma_{22},
\end{aligned}\\
&\begin{aligned}
\label{eq.gradient_k21}
\nabla_{B_{\mK}} J(\mK)
&= 2(P_{12}^\tr A +P_{22}B_{\mK}C)\Sigma_{11} C^\tr \\
&\quad + 2(P_{12}^\tr B C_{\mK}+P_{22}A_{\mK})\Sigma_{12}^\tr C^\tr,
\end{aligned}\\
&\begin{aligned}
\label{eq.gradient_k22}
\nabla_{A_{\mK}} J(\mK)&= 2(P_{12}^\tr B C_{\mK}+P_{22}A_{\mK})\Sigma_{22}\\
&\quad + 2 (P_{12}^\tr A+P_{22}B_{\mK}C)\Sigma_{12}.
\end{aligned}
\end{align}
\end{subequations}
\end{lemma}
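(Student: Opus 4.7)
The plan is to combine the policy-gradient vanishing conditions from \Cref{lemma:gradient}, the block-wise Lyapunov equations \eqref{eq.block_lyapunov_P11}--\eqref{eq.block_lyapunov_sigma22}, and the optimal-transformation characterization of \Cref{theorem.optimal_tansformation} to pin $\mK^\star$ down to the observer-based form \eqref{eq.optimal_form_K}. First I would use the fact that $\mK^\star$ is a critical point of $J$ and hence also of the smooth map $T \mapsto J(\mathscr{T}_T(\mK^\star))$ at $T = I$. The proof of \Cref{theorem.optimal_tansformation} showed this map (as $g(\mH)$ in $\mH = T^{-1}$) is strongly convex, so its unique critical point is its global minimum; thus $T = I$ is the optimal similarity transformation for $\mK^\star$. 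Equation \eqref{eq.optimal_tranforsmation} then yields the key identity
\[
P_{\mK^\star,22} \;=\; -P_{\mK^\star,12}^{\tr}\, W, \qquad W := X_{12}X_{22}^{-1}.
\]
By \Cref{lemma.positive_P}, $P_{\mK^\star} \succ 0$, so both $P_{\mK^\star,22}$ and $P_{\mK^\star,12}$ are invertible.

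Next I would substitute $P_{\mK^\star,12}^\tr = -P_{\mK^\star,22}W^{-1}$ into the gradient formulas \eqref{eq.gradient} and cancel the invertible factor $P_{\mK^\star,22}$. Writing $E_1 := A - W B_{\mK^\star} C$ and $E_2 := B C_{\mK^\star} - W A_{\mK^\star}$, the conditions $\nabla_{A_{\mK}} J = 0$ and $\nabla_{B_{\mK}} J = 0$ collapse to
\[
E_1 \Sigma_{\mK^\star,12} + E_2 \Sigma_{\mK^\star,22} = 0, \quad (E_1 \Sigma_{\mK^\star,11} + E_2 \Sigma_{\mK^\star,12}^\tr) C^\tr = 0.
\]
Combined with the block Lyapunov equations \eqref{eq.block_lyapunov_sigma11}--\eqref{eq.block_lyapunov_sigma22} and the identity $X_{12} - W X_{22} = 0$, the first forces $\Sigma_{\mK^\star,12} = W \Sigma_{\mK^\star,22}$, and the second (using full row rank of $C$ and $\Sigma_{\mK^\star} \succeq X \succ 0$) yields $W B_{\mK^\star} = A S C^\tr (CSC^\tr)^{-1}$, where $S := \Sigma_{\mK^\star,11} - W \Sigma_{\mK^\star,22} W^\tr$ is the Schur complement of $\Sigma_{\mK^\star,22}$. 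An analogous manipulation of $\nabla_{C_{\mK}} J = 0$ produces $-C_{\mK^\star} W^{-1} = (R + B^\tr \hat P B)^{-1} B^\tr \hat P A$ with the dual Schur complement $\hat P := P_{\mK^\star,11} - W^{-\tr} P_{\mK^\star,22} W^{-1}$.

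Plugging these identified gains together with $\Sigma_{\mK^\star,12} = W \Sigma_{\mK^\star,22}$ back into \eqref{eq.block_lyapunov_P11}--\eqref{eq.block_lyapunov_sigma22}, the cross-coupling terms collapse and $S$ is shown to satisfy the filter Riccati \eqref{eq.sigma_riccati} with $\Delta_X = X_{11} - X_{12}X_{22}^{-1}X_{12}^\tr$, while $\hat P$ satisfies the control Riccati \eqref{eq.P_riccati}. Under \Cref{assumption.control_observe} these Riccati equations admit unique positive definite solutions, giving $S = \hat\Sigma$, $\hat P$ as stated, $WB_{\mK^\star} = L^\star$, and $-C_{\mK^\star}W^{-1} = K^\star$. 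The equation $E_1 \Sigma_{\mK^\star,12} + E_2 \Sigma_{\mK^\star,22} = 0$ then pins $WA_{\mK^\star}W^{-1} = A - BK^\star - L^\star C$, establishing $\mK^\star = \mathscr{T}_{T^\star}(\mK^\ddagger)$ with $T^\star := W^{-1} = X_{22}X_{12}^{-1}$ and $\mK^\ddagger$ as in \eqref{eq.riccati_K}. The claim $-P_{\mK^\ddagger,12}^{-\tr}P_{\mK^\ddagger,22} = I_n$ needed to identify $T^\star$ as the optimal transformation of $\mK^\ddagger$ follows by passing to the triangular $(x,x-\xi)$ coordinates of $\mK^\ddagger$, in which the Lyapunov equation for the off-diagonal block of $P_{\mK^\ddagger}$ is homogeneous and hence vanishes by stability of $A - BK^\star$ and $A - L^\star C$. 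Uniqueness then follows from uniqueness of $\hat\Sigma$, $\hat P$, and \Cref{theorem.optimal_tansformation}.

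The hard part will be the Riccati verification: substituting the identified gains and the Schur-complement relations into \eqref{eq.block_lyapunov_P11}--\eqref{eq.block_lyapunov_sigma22} and showing that the cross-coupling terms collapse into exactly the classical LQR and Kalman Riccati structure. The identity $P_{\mK^\star,22} = -P_{\mK^\star,12}^\tr W$ from the first step is the pivot that makes the calculation close up; without it the $P$- and $\Sigma$-block equations do not decouple into clean Riccati form.
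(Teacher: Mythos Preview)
Your proposal does not prove the stated lemma at all. The statement you are asked to establish is \Cref{lemma:gradient}: the explicit block formulas \eqref{eq.gradient} for $\nabla_{C_{\mK}}J$, $\nabla_{B_{\mK}}J$, and $\nabla_{A_{\mK}}J$. Your write-up instead \emph{assumes} these formulas (you open with ``combine the policy-gradient vanishing conditions from \Cref{lemma:gradient}'') and proceeds to characterize the observable stationary point, which is the content of \Cref{theorem.solution_expression}, not of the lemma. Nothing in your proposal derives the gradient expressions themselves.

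What is actually required here is a computation: start from $J(\mK)=\mathrm{Tr}(P_{\mK}X)$ (or equivalently from the value-function recursion $V_{\mK}(\bar{x}_0)=\bar{x}_0^\tr(\bar Q+F^\tr C_{\mK}^\tr R C_{\mK}F)\bar{x}_0+V_{\mK}((\bar A+\bar B\mK\bar C)\bar{x}_0)$), differentiate with respect to each block $A_{\mK},B_{\mK},C_{\mK}$, unroll the recursion to collect the infinite sums $\sum_t x_t\xi_t^\tr$ and $\sum_t \xi_t\xi_t^\tr$, and then take the expectation over $\bar{x}_0$ to identify these sums with the blocks $\Sigma_{12},\Sigma_{22}$ of $\Sigma_{\mK}$. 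That is how the paper proceeds, and some direct argument of this kind is unavoidable; you cannot bootstrap the gradient formulas from the stationary-point analysis that presupposes them.
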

\begin{proof}
The proof follows the similar lines as the state-feedback LQR case~\cite[Lemma 1]{fazel2018global}. By \eqref{eq.lyapunov_equation}, the value function of $\bar{x}_0$ reads as 
\begin{equation}
\nonumber
\begin{aligned}
V_{\mK}(\bar{x}_0) 
= \;&\bar{x}_0^\tr P_{\mK}\bar{x}_0\\
= \;&\bar{x}_0^\tr (\bar{Q} + F^\tr C_{\mK}^\tr RC_{\mK}F)\bar{x}_0\\
&\qquad \quad +\bar{x}_0^\tr(\bar{A}+\bar{B}\mK\bar{C})^\tr P_{\mK}(\bar{A}+\bar{B}\mK\bar{C})\bar{x}_0\\
=\;&\bar{x}_0^\tr (\bar{Q} + F^\tr C_{\mK}^\tr RC_{\mK}F)\bar{x}_0+V_{\mK}((\bar{A}+\bar{B}\mK\bar{C})\bar{x}_0).
\end{aligned}
\end{equation}
Taking the gradient of $V_{\mK}(\bar{x}_0)$ w.r.t. $C_{\mK}$ (note that $\nabla_{C_{\mK}} V_{\mK}((\bar{A}+\bar{B}\mK\bar{C})\bar{x}_0)$ has two terms: one with respect to $C_{\mK}$ in the subscript and one with respect to the input $(\bar{A}+\bar{B}\mK\bar{C})\bar{x}_0$), we have
\begin{equation}
\nonumber
\begin{aligned}
\nabla_{C_{\mK}} V_{\mK}(\bar{x}_0) 
= \; &2((R + B^\tr P_{11} B) C_{\mK}+B^\tr P_{12}A_{\mK})\xi_0\xi_0^\tr\\
&\qquad + 2B^\tr (P_{11}A+P_{12}B_{\mK}C)x_0 \xi_0^\tr \\
&\qquad +\bar{x}_1^\tr \nabla_{C_{\mK}} P_{\mK} \bar{x}_1\big|_{\bar{x}_1=(\bar{A}+\bar{B}\mK\bar{C})\bar{x}_0}\\
= \; &2((R + B^\tr P_{11} B) C_{\mK}+B^\tr P_{12}A_{\mK})\sum_{t=0}^{\infty}\xi_t \xi_t^\tr\\
&\qquad + 2B^\tr (P_{11}A+P_{12}B_{\mK}C)\sum_{t=0}^{\infty}x_t \xi_t^\tr,
\end{aligned}
\end{equation}
where the last step uses recursion and that $x_{t+1} = (\bar{A}+\bar{B}\mK\bar{C})\bar{x}_t$. 

We can also derive the formulas of $\nabla_{B_{\mK}} V_{\mK}(\bar{x}_0)$ and $\nabla_{A_{\mK}} V_{\mK}(\bar{x}_0) $ through
similar steps. Then, we can finally observe \eqref{eq.gradient} by taking the expectation w.r.t. the initial distribution $
\bar{\mathcal{D}}$.
\end{proof}
\subsection{Proof of Theorem \ref{theorem.solution_expression}}
\begin{proof}
Suppose an observable stationary point exists, denoted as $\mK^\star \in \mathbb{K}_o \cap \mathbb{K}_s \cap \mathbb{K}$. By Lemma \ref{lemma.Lyapunov_stability}(b) and Lemma \ref{lemma.positive_P}, we know $\Sigma_{\mK^\star}, P_{{\mK}^\star} \in \mathbb{S}_{++}^{2n}$. By the Schur complement,  it is obvious that
\begin{equation}
\nonumber
\begin{aligned}
{\hat{P}}&:=P_{11}-P_{12}P_{22}^{-1}P_{12}^\tr \in \mathbb{S}^n_{++},\\  {\hat{\Sigma}}&:=\Sigma_{11}  - \Sigma_{12}\Sigma_{22}^{-1}\Sigma_{12}^\tr \in \mathbb{S}^n_{++}.
 \end{aligned}
\end{equation}

Throughout this proof, the subscript of the submatrices of $\Sigma_{\mK^\star}$ and  $P_{{\mK}^\star}$ under observable stationary point $\mK^\star$ will be omitted. Since \eqref{eq.gradient} is linear in $A_{\mK}$, $B_{\mK}$, and $C_{\mK}$, when  $\mK^\star \in \mathbb{K}_s$, it is not hard to show that 
\begin{subequations}
\label{eq.K_original}
\begin{align}
&\begin{aligned}
\label{eq.K_12_original}
C_{\mK^\star}& =-K^\star\Sigma_{12}\Sigma_{22}^{-1},
\end{aligned}\\
&\begin{aligned}
\label{eq.K_21_original}
B_{\mK^\star} = -P_{22}^{-1} P_{12}^\tr L^\star,
\end{aligned}\\
&\begin{aligned}
\label{eq.K_22_original}
A_{\mK^\star}=-P_{22}^{-1}P_{12}^\tr (A-L^\star C-BK^\star)\Sigma_{12}\Sigma_{22}^{-1},
\end{aligned}
\end{align}
\end{subequations}
where $K^\star$ and $L^\star$ are
$$
\begin{aligned}
K^\star &= (R + B^\tr{\hat{P}} B)^{-1}B^\tr {\hat{P}}A, \\
L^\star &= A{\hat{\Sigma}} C^\tr (C{\hat{\Sigma}} C^\tr)^{-1}.
\end{aligned}$$

Combining  \eqref{eq.block_lyapunov_P12}, \eqref{eq.block_lyapunov_P22}, and \eqref{eq.K_original}, we prove that 
\begin{equation}
\label{eq.inverse}
 P_{12}^\tr \Sigma_{12} + P_{22}\Sigma_{22}=0,  
\end{equation}
which immediately leads to \begin{equation}
\label{eq.inverse_relation}
(-P_{22}^{-1}P_{12}^\tr)^{-1}=\Sigma_{12}\Sigma_{22}^{-1}.
\end{equation}
We then define $T^\ddagger:=-P_{22}^{-1}P_{12}^\tr$, and thus $(T^\ddagger)^{-1}=\Sigma_{12}\Sigma_{22}^{-1}$. 
Similarly, from \eqref{eq.block_lyapunov_sigma12}, \eqref{eq.block_lyapunov_sigma22}, and \eqref{eq.K_original}, \eqref{eq.inverse} can be rewritten as 
\begin{equation}
\label{eq.relation_P_and_x}
P_{12}^\tr X_{12} + P_{22}X_{22}=0.
\end{equation}
Combining \eqref{eq.inverse} with \eqref{eq.relation_P_and_x} leads to
\begin{equation}
\label{eq.T_and_X}
{T^\ddagger}= -P_{22}^{-1}P_{12}^\tr=X_{22}X_{12}^{-1}.
\end{equation}
Combining \eqref{eq.K_original}, \eqref{eq.inverse_relation}, and \eqref{eq.T_and_X}, we can observe that $\mK^\star$ is in the form shown in \eqref{eq.optimal_form_K}.

It remains to show that 
\begin{itemize}
    \item ${T^\ddagger}$ is the optimal transformation matrix of ${\mK}^\ddagger$ given in \eqref{eq.optimal_tranforsmation} (i.e., $T^\ddagger=- X_{22}X_{12}^{-1}P_{{\mK}^\ddagger,12}^{-\tr}P_{{\mK}^\ddagger,22}=T^\star$);
    \item ${\hat{P}}$ and ${\hat{\Sigma}}$ are the unique positive definite solutions to the Riccati equations \eqref{eq.P_riccati} and \eqref{eq.sigma_riccati}, respectively. 
\end{itemize}
First,  by \eqref{eq.optimal_form_K} and \eqref{eq.similarity_P} in Proposition \ref{prop.similarity_variance}, we have
\begin{equation}
\nonumber
P_{{\mK}^\star}=\begin{bmatrix}
I_n & 0 \\
0 & (T^\ddagger)^{-\tr}
\end{bmatrix}P_{{\mK}^\ddagger}\begin{bmatrix}
I_n & 0 \\
0 & (T^\ddagger)^{-1}
\end{bmatrix}.
\end{equation}
From \eqref{eq.relation_P_and_x}, it is not hard to show that 
\begin{equation}
\nonumber
(T^\ddagger)^{-\tr}P_{{\mK}^\ddagger,12}^\tr X_{12} + (T^\ddagger)^{-\tr}P_{{\mK}^\ddagger,22}(T^\ddagger)^{-1}X_{22}=0,
\end{equation}
which directly leads to 
$
-P_{{\mK}^\ddagger,22}^{-1}P_{{\mK}^\ddagger,12}^\tr=I_n.
$
Therefore, by \eqref{eq.optimal_tranforsmation} of Theorem \ref{theorem.optimal_tansformation}, one has
\begin{equation}
\nonumber
{T^\ddagger}=X_{22}X_{12}^{-1}=- X_{22}X_{12}^{-1}P_{{\mK}^\ddagger,12}^{-\tr}P_{{\mK}^\ddagger,22}=T^\star,
\end{equation}
which is exactly the optimal transformation matrix of ${\mK}^\ddagger$.

Then, we will derive \eqref{eq.P_riccati}.  Multiplying \eqref{eq.block_lyapunov_P22} by ${T^\star}^\tr$ on the left and by ${T^\star}$ on the right (or multiplying \eqref{eq.block_lyapunov_P12} by ${T^\star}$ on the right), we have
\begin{equation}
\label{eq.P_22_transform}
\begin{aligned}
&P_{12}P_{22}^{-1}P_{12}^\tr= A^\tr {\hat{P}} B (R+B^\tr {\hat{P}} B)^{-1}B^\tr {\hat{P}} A \\
&\quad +A^\tr P_{12}P_{22}^{-1}P_{12}^\tr A+C^\tr {L^\star}^\tr P_{12}P_{22}^{-1}P_{12}^\tr L^\star C\\
&\quad -A^\tr P_{12}P_{22}^{-1}P_{12}^\tr L^\star C-C^\tr {L^\star}^\tr P_{12}P_{22}^{-1}P_{12}^\tr A.
\end{aligned}
\end{equation}
Then,  plugging \eqref{eq.K_21_original} in \eqref{eq.block_lyapunov_P11} leads to
\begin{equation}
\label{eq.P_11_expand}
\begin{aligned}
P_{11}&=Q+ A^\tr P_{11} A - C^\tr {L^\star}^\tr P_{12} P_{22}^{-1} P_{12}^\tr A \\
&- A^\tr P_{12} P_{22}^{-1}P_{12}^\tr L^\star C+C^\tr {L^\star}^\tr P_{12}  P_{22}^{-1}P_{12}^\tr L^\star C.
\end{aligned}
\end{equation}
Subtracting \eqref{eq.P_22_transform} from \eqref{eq.P_11_expand}, we can finally see that $\hat{P}$ satisfies the Riccati equation \eqref{eq.P_riccati}. Through similar steps, we can derive from \eqref{eq.block_lyapunov_sigma} that $\hat{\Sigma}$ satisfies the Riccati equation \eqref{eq.sigma_riccati}, which completes the proof.
\end{proof}

\bibliographystyle{ieeetr}
\bibliography{ref}

\end{document}